\documentclass[runningheads]{llncs}
\usepackage[T1]{fontenc}
\usepackage{xcolor}
\usepackage{amsmath,amssymb}
\usepackage{booktabs}
\usepackage{algorithm}
\usepackage{algorithmicx}
\usepackage{algpseudocode}
\usepackage{verbatim}
\usepackage{fancyvrb}
\usepackage{fvextra}
\usepackage{url}
\usepackage{graphicx}

\usepackage{listings}
\usepackage{caption}
\begin{document}
\title{World2Rules: A Neuro-Symbolic Framework for Learning World-Governing Safety Rules for Aviation}

\titlerunning{World2Rules}
%
\author{Haichuan Wang\inst{1}\and
Jay Patrikar\inst{1} \and
Sebastian Scherer\inst{1}}

\authorrunning{H. Wang et al.}

\institute{Carnegie Mellon University, Pittsburgh PA 15213, USA\\
\email{\{haichuaw,jpatrika,basti\}@andrew.cmu.edu}}

\maketitle              
%


\begin{abstract}
Many real-world safety-critical systems are governed by explicit rules that define unsafe world configurations and constrain agent interactions. In practice, these rules are complex and context-dependent, making manual specification incomplete and error-prone. Learning such rules from real-world multimodal data is further challenged by noise, inconsistency, and sparse failure cases. Neural models can extract structure from text and visual data but lack formal guarantees, while symbolic methods provide verifiability yet are brittle when applied directly to imperfect observations. We present World2Rules, a neuro-symbolic framework for learning world-governing safety rules from real-world multimodal aviation data. World2Rules learns from both nominal operational data and aviation crash and incident reports, treating neural models as proposal mechanisms for candidate symbolic facts and inductive logic programming as a verification layer. The framework employs hierarchical reflective reasoning, enforcing consistency across examples, subsets, and rules to filter unreliable evidence, aggregate only mutually consistent components, and prune unsupported hypotheses. This design limits error propagation from noisy neural extractions and yields compact, interpretable first-order logic rules that characterize unsafe world configurations. We evaluate World2Rules on real-world aviation safety data and show that it learns rules that achieve 23.6\% higher F1 score than purely neural and 43.2\% higher F1 score than single-pass neuro-symbolic baseline, while remaining suitable for safety-critical reasoning and formal analysis.

\keywords{Neuro-symbolic reasoning \and ILP \and Rule learning}
\end{abstract}


\section{Introduction}
\begin{figure}[t]
    \centering
    \includegraphics[width=\textwidth]{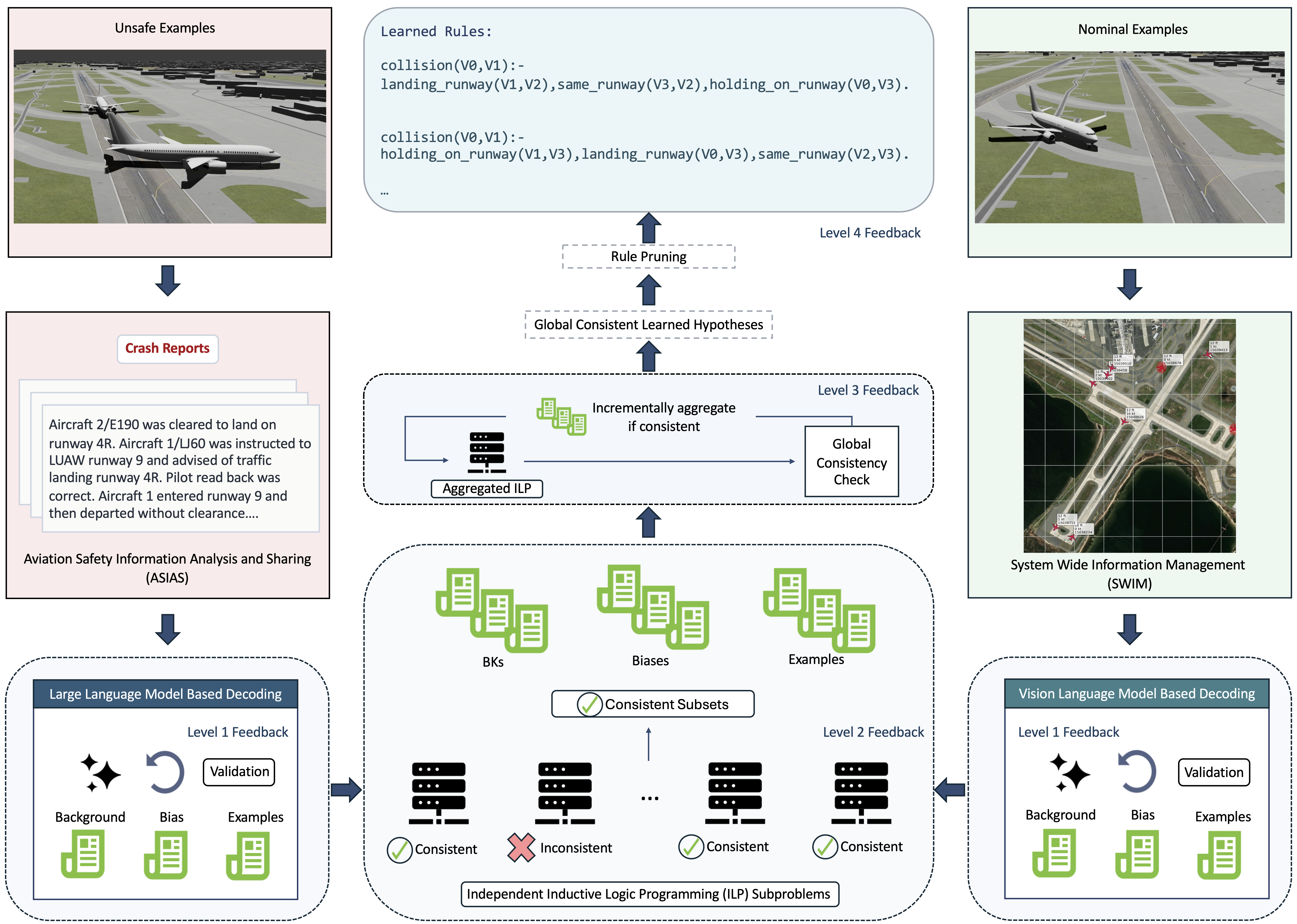}
    \caption{Overview of World2Rules framework. Textual incident reports and visual trajectory data are processed by Large Language Model (LLM) and Vision Language Model (VLM) based extractors to generate Inductive Logic Programming (ILP) inputs, which are decomposed into independent ILP subproblems. Consistency-driven feedback is applied at the example, subset, and rule levels to filter, aggregate, and prune hypotheses, yielding compact and interpretable symbolic safety rules.}
    \label{fig:pipeline}
\end{figure}
Many real-world systems are governed not only by physical processes, but by explicit rules that constrain how agents may interact with their environment. These rules define which configurations of agents, objects, and relations are permissible and which are unsafe. Making such rules explicit is essential for reasoning, verification, and accountability in safety-critical settings. However, in practice, world-governing rules are often implicitly encoded in data, procedures, regulations, or expert knowledge~\cite{patrikar2022predicting}, rather than represented in a form that supports systematic reasoning.

Modern data sources increasingly provide rich, multimodal observations of the world, including text, images, and structured relational data. Together, these signals implicitly describe how a world operates and where its safety boundaries lie. Importantly, such data is often asymmetric: large volumes of nominal, everyday behavior are readily available, while unsafe or failure cases appear sparsely, typically in the form of incident or crash reports. Learning safety-relevant rules from this combination of abundant nominal operational data and sparse, noisy safety violation data poses a fundamental challenge for data-driven methods.

Neural models excel at processing unstructured inputs and extracting salient patterns from both text and visual observations. However, they offer limited guarantees of logical consistency, robustness, or safety. In particular, purely neural approaches struggle to reconcile conflicting evidence across modalities, are vulnerable to spurious correlations, and provide little support for formal verification or counterfactual reasoning, properties critical in aviation. These limitations are amplified when learning from imbalanced data, where rare but safety-critical failure cases must be interpreted in the context of overwhelmingly nominal behavior.

Symbolic rules address these challenges by encoding explicit relational structure and invariants that hold across contexts. By representing knowledge in a logical form, symbolic rules support deductive reasoning, consistency checking, and verifiable inference, while enabling compositional generalization beyond the specific configurations observed during training. Prior work has shown that such representations capture core relational regularities that neural models alone fail to learn reliably, and that they form a foundation for robust, interpretable, and safety-aligned reasoning~\cite{Marra2024,Wang2022,Bouneffouf2022}. Recent methods have demonstrated the effectiveness of injecting hand-specified symbolic rules into learned planners and policies~\cite{patrikar2024rulefuser,aloor2023followtherules}, but these approaches assume the rules are given rather than learned from data.

Recent advances in large foundation models have renewed interest in learning symbolic structures directly from data, treating neural models as generators of candidate symbolic representations~\cite{Padalkar2025}. However, when symbolic rules are induced from noisy, multimodal observations, particularly from rare failure cases, neural proposals alone are insufficient. Hallucinated predicates, inconsistent relations, and unsupported generalizations can easily lead to incorrect or unsafe conclusions~\cite{Ji2023}. In safety-critical settings, such failures are unacceptable.

We present World2Rules, a neuro-symbolic framework for learning world-governing safety rules from multimodal data through reflective reasoning. World2Rules treats pretrained models as proposal mechanisms that extract candidate symbolic facts from multimodal data, while an inductive logic programming (ILP) system acts as a verification layer that reflects feedback onto the learning process~\cite{Cropper2022}. Candidate rules are repeatedly evaluated for logical feasibility, consistency with both nominal operational and safety violation data, and empirical support. Learning is decomposed into independent subproblems that are validated under symbolic constraints, and only mutually consistent components are incrementally aggregated. This reflective, solver-in-the-loop design enforces multi-level consistency, limits error propagation from noisy neural proposals, and yields compact, interpretable rule sets that remain stable under imperfect observations.

A key aspect of World2Rules is its explicit use of both nominal operation and safety violation data. Prior work has shown that crash and incident reports can serve as contrastive evidence for learning safe behaviors~\cite{patrikar2025negativedata}. Rather than treating these sources symmetrically, World2Rules leverages their complementary roles: nominal operational data constrains the space of permissible world configurations, while incident and crash reports provide evidence that delineates unsafe or forbidden states. By enforcing consistency across both types of evidence, the framework learns rules that are not only predictive, but also aligned with the underlying safety structure of the world.

Aviation safety analysis provides a compelling instantiation of this problem, where explicit world-governing rules are essential.
The 2023 runway incursion at JFK occurred when an aircraft failed to comply with the rule of crossing the correct runway, leading to a near miss~\cite{ntsb2024jfkincursion}.
Deviating from normal operation rules or air traffic controller instructions could also cause catastrophic outcomes, such as the accident at Tokyo Haneda, Japan, where two planes collided on the runway~\cite{JTSB2024Haneda}.
Aircraft taxiing, crossing runways, taking off, and landing are coordinated through a tightly constrained set of procedural and regulatory rules designed to prevent unsafe interactions.
Using nominal aircraft trajectory data alongside historical incident and crash reports, we show that treating ILP as a verification and safety layer substantially improves robustness and generalization compared to purely neural or na\"ive neuro-symbolic baselines, while preserving interpretability.

In summary, this work makes the following contributions:
\begin{enumerate}
    \item We introduce World2Rules, a neuro-symbolic framework for learning declarative, world-governing safety rules from multimodal data.
    \item We formalize a hierarchical consistency-checking procedure that integrates pretrained extractors with solver-backed verification.
    \item We demonstrate how combining nominal operational data with safety violation evidence enables robust learning of safety rules for airport surface operations.
\end{enumerate}


\section{Related Work}
\label{sec:relatedWork}

Recent work has explored the integration of large language models (LLMs) with symbolic rule induction, highlighting challenges and opportunities for combining neural perception with formal reasoning.
Yang et al.~\cite{yang2025robusthypothesisgenerationllmautomated} introduce a multi-agent framework where LLMs automatically construct predicate vocabularies and language bias for ILP, reducing reliance on expert-defined symbolic structures.
Peng et al.~\cite{peng2025abductivelogicalruleinduction} combine multimodal LLMs with ILP: their ILP-CoT approach uses LLM-proposed rule structures to prune ILP search spaces and then employs symbolic ILP to generate formally grounded logical rules.

Scaling ILP to larger rule sets remains an open challenge.
Hocquette et al.~\cite{hocquette2024joiner} address this by composing smaller rules via constraint solvers, showing improved predictive performance in structured domains such as game playing.
These methods share the goal of leveraging neural models to support symbolic induction while mitigating noise and search complexity.

Complementary to these efforts, Gandar{\^e}la et al.~\cite{gandarela2025inductivelearninglogicaltheories} present a systematic analysis of inductive learning of logical theories with LLMs, studying how theory expressivity and dependency structure affect performance and highlighting the limitations of directly generating logical programs with LLMs, even when augmented with formal inference feedback, particularly for long relationship chains.

Other neuro-symbolic approaches focus on predicate-level generation and correction. Vision-to-predicate approaches such as Pix2Pred~\cite{athalye2025pixelspredicateslearningsymbolic} generate structured relational predicates from visual inputs, offering a promising way to automatically produce symbolic inputs for ILP. Hu et al.~\cite{Hu_Dai_Jiang_Zhou_2025} propose Abductive Reflection, which leverages abductive reasoning to identify and rectify inconsistencies in neural predictions with respect to domain knowledge. Padalkar et al.~\cite{padalkar2024neurosymbolic} present a neurosymbolic framework for bias correction using class-specific sparse filters to improve interpretability and accuracy. However, in aviation safety analysis, such predicates must be validated and combined under explicit operational rules, motivating solver-backed ILP to enforce global consistency during rule induction.

Domain-specific language models have also been proposed for safety analysis. Aviation-BERT~\cite{doi:10.2514/6.2023-3436} adapts BERT~\cite{devlin2019bertpretrainingdeepbidirectional} to aviation incident reports, improving performance on text classification and information extraction tasks in the aviation domain. However, they do not yield interpretable or formally verifiable rules, limiting their usefulness for rule-violation analysis, which is fundamental to aviation safety.

Unlike these methods, World2Rules derives ILP inputs from raw multimodal data rather than assuming clean symbolic inputs, and enforces consistency at multiple levels of granularity (extraction, subset, aggregation, and rule support) to produce interpretable hypotheses that generalize to safety-critical runway incursion detection.

\section{Problem Setting}
\label{sec:problemSetting}
World2Rules learns declarative safety rules from multimodal data.
We first define the general problem of learning safety constraints as logical rules, then formalize the ILP learning setting.

\subsection{Learning Safety Rules as Logical Constraints}
Consider a world populated by agents, objects, and relations among them.
Some configurations of these entities are safe; others violate world-governing safety constraints.
Our goal is to learn a set of first-order logic rules that define a target predicate
\[ \texttt{target}(X, Y), \]
which holds when entities $X$ and $Y$ participate in a configuration that violates a safety constraint.

Learning these rules requires two complementary data sources: (1) \emph{violation data}, documenting events in which the target predicate holds, and (2) \emph{nominal data}, capturing routine operations in which no violation occurs.
These two sources are independent and unpaired: they need not correspond to the same events, locations, or time periods.
Violation data provides positive evidence that delineates unsafe configurations, while nominal data constrains the space of permissible states and prevents overgeneralization.

We instantiate this problem in the domain of aviation surface operations, where the target predicate becomes
\[ \texttt{collision}(A, B), \]
indicating that aircraft $A$ and $B$ participate in a potential collision or loss-of-separation event.
The violation data consists of aviation incident and crash reports from sources such as the Aviation Safety Information Analysis and Sharing (ASIAS) system~\cite{FAA_ASIAS}, while the nominal data consists of airport surface images overlaid with routine trajectory data from Amelia-48~\cite{navarro2024amelia}.

\subsection{ILP Learning Setting}

In classical ILP, given a hypothesis space $\mathcal{H}$ and constraints $C$, the learner receives positive examples $E^{+}$, negative examples $E^{-}$, and background knowledge $B$ as the tuple $(E^{+}, E^{-}, B)$ and searches for a hypothesis $H \in \mathcal{H}$ that is consistent with the examples.
Unlike classical ILP, our system \emph{derives} $(E^{+}, E^{-}, B)$ from raw multimodal data $D$ using pretrained language and vision-language models.
In our setting, the raw input is a set of data sources:
\[D = \text{(violation data, nominal data)}.\]

Pretrained models extract candidate symbolic facts from $D$.
Violation data yields positive examples $E^{+}$, encoding world states in which the target predicate holds, along with grounded background facts $B$ (e.g., entity identifiers, spatial relations, movement predicates).
Nominal data yields negative examples $E^{-}$, encoding configurations in which the target predicate must not hold.
Because the extracted facts may be incomplete, noisy, or contain hallucinated relations, they cannot be assumed to be mutually consistent.
We therefore treat the extracted outputs as noisy inputs to the ILP pipeline and rely on solver-backed verification (Sec.~\ref{sec:ILPLearning}) to filter inconsistencies and construct a consistent hypothesis.

Domain experts define the set of allowable predicates and their argument types; this vocabulary remains fixed throughout the pipeline.
Given this vocabulary, the pretrained models generate the bias file encoding $\mathcal{H}$.
The constraints $C$ (search bounds, clause length limits, and other solver-level parameters) are also set by domain experts prior to learning.

\begin{definition}[Constructed ILP Instance]
Given raw multimodal data $D$, a domain expert-defined predicate vocabulary, a model-generated bias file encoding hypothesis space $\mathcal{H}$ over that vocabulary, and expert-configured solver constraints $C$, the system constructs an ILP problem instance
\[
(E^{+}, E^{-}, B),
\]
where the positive examples $E^{+}$, negative examples $E^{-}$, and background knowledge $B$
are derived from $D$ via pretrained extractors.
\end{definition}

\section{Our Approach}
\label{sec:ourApproach}

World2Rules combines pretrained extractors with solver-backed inductive logic programming to learn safety rules from multimodal data.
We first describe the domain-specific data extraction pipelines, then the ILP engine, the incremental learning and aggregation procedure, and correctness guarantees.

\subsection{Violation Data Extraction}
\label{sec:PosData}
Positive examples are derived from unstructured aviation safety reports sourced from ASIAS~\cite{FAA_ASIAS}.
As shown in Fig.~\ref{fig:crash_extract}, extraction follows a four-step pipeline shared by both data modalities: (1) preprocessing, (2) entity and relation extraction, (3) context generation, and (4) validation.
\begin{figure}
    \centering
    \includegraphics[width=\linewidth]{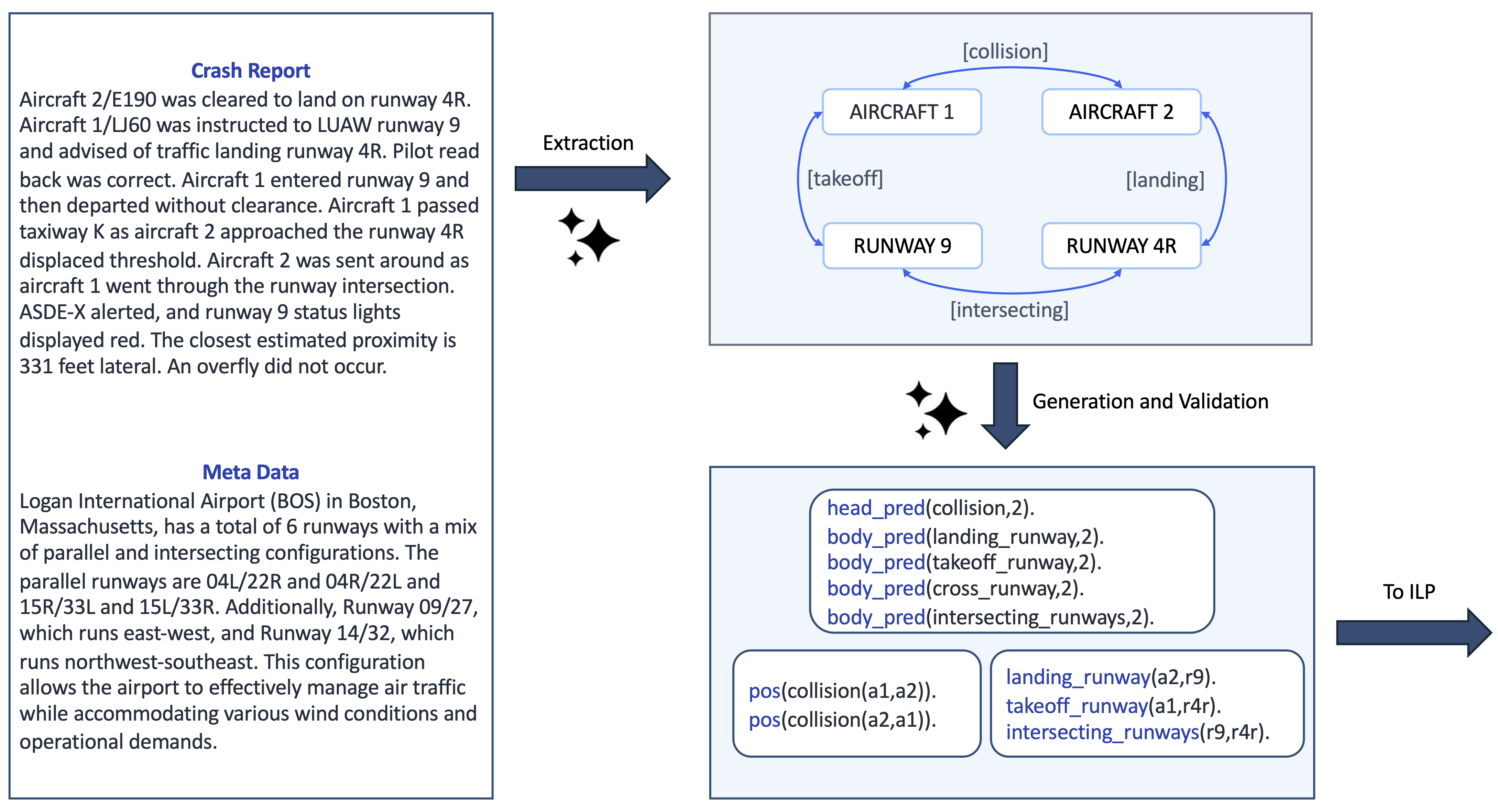}
    \caption{Extraction pipeline for converting crash reports into ILP inputs. Each report is augmented with airport metadata, parsed by an LLM into typed entities and relations, and converted into background knowledge, bias, and example files. Validation ensures syntactic and semantic consistency before downstream learning.}
    \label{fig:crash_extract}
\end{figure}

\paragraph{Preprocessing.}
Each crash report is augmented with airport metadata (runway numbers, geometry, orientation) to improve grounding.

\paragraph{Entity and Relation Extraction.}
An LLM extracts typed entities (e.g., Runway, Vehicle, Aircraft) and their pairwise relations from each report.

\paragraph{Context Generation.}
The LLM then generates the background knowledge, bias, and example files.
The \emph{bias file} encodes the hypothesis space $\mathcal{H}$: it specifies the allowed predicates, their argument types, and the mode declarations that constrain clause construction.
All prompts enforce strict adherence to a fixed master predicate list and include in-context examples to maintain consistency.

\paragraph{Validation.}
\label{sec:LLMValidation}
Each extraction is validated for syntactic and semantic consistency.
If validation fails, the report is reprocessed for up to three attempts; reports that remain invalid are discarded.

\subsection{Nominal Data Extraction}
\label{sec:NegData}
Negative examples are derived from routine airport surface operations using trajectory data from Amelia-48~\cite{navarro2024amelia}.
The extraction follows the same four-step pipeline as violation data, with the following differences.

\paragraph{Preprocessing.}
Trajectory data for each scene is overlaid on the corresponding airport image.
Each runway is annotated with its identifier (e.g., 31L) using hand-labeled references to aid recognition.

\paragraph{Entity and Relation Extraction.}
A VLM replaces the LLM as the extractor, identifying typed entities and their relations directly from the annotated airport images.

\paragraph{Context Generation and Validation.}
\label{sec:VLMValidation}
The VLM generates grounded predicate facts following the same master predicate list and in-context example protocol as the violation data pipeline. Validation and retry logic are identical: up to three attempts, with invalid extractions discarded.

\subsection{Inductive Logic Programming System}
We use Popper~\cite{cropper2021learning} as the underlying ILP engine, configured with RC2~\cite{article} and NuWLS~\cite{Chu_Cai_Luo_2023} solvers.
Given background knowledge $B$ and positive and negative examples $(E^{+},E^{-})$, Popper searches for a set of clauses $H$ that is consistent with the examples under the constraints defined by the mode declarations and type signatures.

World2Rules does not modify Popper itself; our contribution is a novel procedure for constructing ILP inputs from multimodal data and performing incremental rule induction over them.

\subsection{Incremental ILP Learning and Rule Aggregation}
\label{sec:ILPLearning}

Rather than running ILP once over the entire dataset, World2Rules applies a four-level feedback mechanism to ensure robustness against extraction errors, noisy hypotheses, and inconsistent examples.

We define a \emph{subset} as a self-contained ILP instance $(B_i, E_i^{+}, E_i^{-})$ constructed by pairing one violation report (which provides positive examples and associated background knowledge) with one nominal observation (which provides negative examples and associated background knowledge).
Given $M$ such subsets,
\[\{(B_i, E_i^{+}, E_i^{-})\}_{i=1}^{M},\]
Popper is run independently on each subset as a solver-backed consistency check.
Subsets whose background knowledge or examples are inconsistent are discarded, while consistent subsets are aggregated.
A final hypothesis is then learned from the aggregated ILP instance and pruned to form the final program.

\paragraph{Level 1: Extraction Consistency Feedback.}
Before any ILP learning occurs, each extraction undergoes
consistency checking (Sections~\ref{sec:LLMValidation}, \ref{sec:VLMValidation}). Only syntactically and semantically valid Prolog bundles proceed to ILP learning.

\paragraph{Level 2: Subset-Level ILP Feedback.}
Each subset $(B_i, E_i^{+}, E_i^{-})$ is treated as an independent ILP task. The resulting $H_i$ is run against that subset to ensure consistency and completeness, i.e.
\[
\forall e \in E_i^{+},\; B_i \cup H_i \models e,
\qquad
\forall e \in E_i^{-},\; B_i \cup H_i \not\models e.
\]

If Popper fails to find a hypothesis, the subset is discarded. This step ensures that downstream aggregation is not polluted by subsets whose data is fundamentally inconsistent.

\paragraph{Level 3: Iterative Aggregation of Globally Consistent Subsets.}
After filtering for subset-level consistency, the system performs an iterative aggregation procedure. Let $\{(B_i, E_i^{+}, E_i^{-})\}_{i=1}^{M}$ be the reliable subsets from Level~2. The goal is to construct an aggregated ILP problem whose hypothesis achieves $100\%$ accuracy on all incorporated data.

We maintain a set $S$ of accepted subset indices and aggregated components
\[B_S, \qquad E_S^{+}, \qquad E_S^{-},\] initially all empty. At each iteration, we consider a candidate subset $j\notin S$ and form temporary unions:
\[
B' = B_S \cup B_j, \qquad
E^{+}{}' = E_S^{+} \cup E_j^{+}, \qquad
E^{-}{}' = E_S^{-} \cup E_j^{-}.
\]
We then run Popper on the aggregated ILP instance $(B', E^{+}{}', E^{-}{}')$, obtaining a hypothesis $H'$. The candidate subset $j$ is accepted if and only if Popper finds a non-empty hypothesis that achieves 100\% accuracy on the aggregated examples, with Popper recomputing the hypothesis from scratch at each step:
\[
B' \cup H' \models E^{+}{}'
\qquad \text{and} \qquad
B' \cup H' \not\models E^{-}{}'.
\]
If this condition holds, we update
\[
S \gets S \cup \{j\},\qquad
B_S \gets B',\qquad
E_S^{+} \gets E^{+}{}',\qquad
E_S^{-} \gets E^{-}{}',\qquad
H_S \gets H',
\]

If aggregation fails, we progressively remove examples from the candidate subset until consistency is restored. If consistency is achieved after removing some examples, the remaining partial subset is retained; otherwise, the entire subset $j$ is discarded. This process is repeated until all subsets have been evaluated.

This iterative global-consistency loop assumes that the already-accepted subsets are more reliable than any newly tested subset: a subset is rejected precisely when its inclusion would reduce global accuracy below $100\%$. In effect, the system constructs the largest mutually consistent collection of extracted ILP components and learns a hypothesis from their aggregated ILP problem.

Because such aggregation can be sensitive to the order in which subsets are considered, we introduce a shuffle-based fallback mechanism. Aggregation is first attempted using a default chronological order of the violation reports, reflecting the natural accumulation of safety knowledge over time. If the resulting aggregation discards more than a fraction $\rho$ of reliable subsets, the procedure is repeated using randomly shuffled orderings. This retry process is performed for a bounded number of attempts.

\paragraph{Level 4: Rule-Level Support Pruning.}
Following the aggregation step, each rule $r \in H_S$ is evaluated using the empirical support metric. For a rule $r$, we define its positive support as:
\[
\operatorname{supp}(r)
    = \left| \{ e \in E^{+} \mid B \cup \{r\} \models e \} \right|.
\]

Rules with very low support tend to correspond to overspecialized or
spurious patterns arising from noise in the data sources. Pruning is applied once globally to the aggregated rule set:
\[
\operatorname{supp}(r)
    \;\ge\; \tau \cdot \max_{r' \in H_S} \operatorname{supp}(r'),
    \quad\text{with }\tau = 0.20.
\]

Rules failing to meet this threshold are removed. The remaining high-support rules constitute the final hypothesis used for evaluation on external datasets.
Note that $\rho$ (the subset discard fraction threshold at Level~3) and $\tau$ (the support pruning threshold at Level~4) are distinct parameters controlling different stages of the pipeline.

\begin{algorithm}[htbp]
\caption{Subset-Level Consistency Checking}
\label{alg:subset-check}
\footnotesize
\begin{algorithmic}[1]
\State $\mathcal{R}$ \Comment{set of reliable subset indices}
\State $\mathcal{R} \gets \emptyset$
\For{$i = 1$ to $M$}
    \State $(B_i, E_i^{+}, E_i^{-}) \gets \textsc{Decode}(D_i)$
    \State $H_i \gets \textsc{Popper}(B_i, E_i^{+}, E_i^{-})$
    \If{$H_i = \emptyset$
        \textbf{or} $B_i \cup H_i \not\models E_i^{+}$
        \textbf{or} $B_i \cup H_i \models E_i^{-}$}
        \State discard subset $i$ as unreliable
    \Else
        \State $\mathcal{R} \gets \mathcal{R} \cup \{i\}$
    \EndIf
\EndFor
\State \Return $\mathcal{R}$
\end{algorithmic}
\end{algorithm}

\begin{algorithm}[htbp]
\caption{Iterative Global Aggregation with Support Pruning}
\label{alg:global-agg}
\footnotesize
\begin{algorithmic}[1]
\Require Reliable subset indices $\mathcal{R} \subseteq \{1,\dots,M\}$, subsets $\{(B_i, E_i^{+}, E_i^{-})\}_{i \in \mathcal{R}}$
\Require Maximum retries $T$, failure threshold $\rho$
\Ensure Final pruned hypothesis $H_{\mathrm{final}}$

\State $S^{*} \gets \emptyset$, \quad $H^{*} \gets \emptyset$
\State $B^{*} \gets \emptyset$, \quad $E^{+*} \gets \emptyset$, \quad $E^{-*} \gets \emptyset$
\State $k^{*} \gets -1$, \quad $\textit{success}^{*} \gets \textbf{false}$

\For{$t = 1$ to $T$}
    \If{$t = 1$}
        \State order $\mathcal{R}$ deterministically based on chronological order
    \Else
        \State randomly shuffle $\mathcal{R}$
    \EndIf

    \State $S \gets \emptyset$
    \State $B_S \gets \emptyset$, \quad $E_S^{+} \gets \emptyset$, \quad $E_S^{-} \gets \emptyset$
    \State $H_S \gets \emptyset$

    \For{each $j \in \mathcal{R}$ in the current order}
        \State $B' \gets B_S \cup B_j$
        \State $E^{+}{}' \gets E_S^{+} \cup E_j^{+}$
        \State $E^{-}{}' \gets E_S^{-} \cup E_j^{-}$
        \State $H' \gets \textsc{Popper}(B', E^{+}{}', E^{-}{}')$
        \If{$H' \neq \emptyset$
            \textbf{and} $B' \cup H' \models E^{+}{}'$
            \textbf{and} $B' \cup H' \not\models E^{-}{}'$}
            \State $S \gets S \cup \{j\}$
            \State $B_S \gets B'$, \quad $E_S^{+} \gets E^{+}{}'$, \quad $E_S^{-} \gets E^{-}{}'$
            \State $H_S \gets H'$
        \EndIf
    \EndFor

    \State $k \gets |S|$, \quad $\textit{fail\_frac} \gets 1 - \frac{k}{|\mathcal{R}|}$
    \State $\textit{success} \gets (H_S \neq \emptyset)$

    \If{$(\textit{success} \land \neg \textit{success}^{*})$
        \textbf{or} $((\textit{success} = \textit{success}^{*}) \land (k > k^{*}))$}
        \State $S^{*} \gets S$, \quad $H^{*} \gets H_S$, \quad $k^{*} \gets k$, \quad $\textit{success}^{*} \gets \textit{success}$
        \State $B^{*} \gets B_S$, \quad $E^{+*} \gets E_S^{+}$, \quad $E^{-*} \gets E_S^{-}$
    \EndIf

    \If{$\textit{fail\_frac} \le \rho$}
        \State \textbf{break}
    \EndIf
\EndFor

\State $H_{\mathrm{final}} \gets \textsc{PruneBySupport}(H^{*}, B^{*}, E^{+*})$
\State \Return $H_{\mathrm{final}}$
\end{algorithmic}
\end{algorithm}

\par\medskip
\noindent
This four-level feedback architecture combines pretrained extraction with symbolic induction, allowing the system to correct extraction mistakes, filter inconsistent logic programs, and retain only rules that exhibit stable predictive behavior across the dataset.
Algorithms~\ref{alg:subset-check} and~\ref{alg:global-agg} summarize this pipeline.

\subsection{Correctness}
We prove the correctness of the World2Rules prior to rule pruning. Following rule-level support pruning, the hypothesis remains negative-safe, but no longer carries a completeness guarantee with respect to positive examples.

A rule (or hypothesis) is \emph{negative-safe} if, combined with the background knowledge, it does not entail any negative example.

\begin{theorem}[Pre-Pruning Training Correctness]

Let $(E^{+}, E^{-})$ denote the aggregated positive and negative examples accepted by the four-level feedback mechanism, and let $H_{\mathrm{agg}}$ be the hypothesis produced at the end of Level~3 (before pruning). If every rule admitted during aggregation is negative-safe, then $H_{\mathrm{agg}}$ is training-correct on $(E^{+}, E^{-})$.

\end{theorem}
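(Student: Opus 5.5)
The proof is by induction on the iterations of the Level~3 loop in Algorithm~\ref{alg:global-agg}. Training-correctness of $H_{\mathrm{agg}}$ on $(E^{+},E^{-})$ means two things: completeness, $B\cup H_{\mathrm{agg}}\models e$ for every $e\in E^{+}$; and consistency, $B\cup H_{\mathrm{agg}}\not\models e$ for every $e\in E^{-}$. Here $B$ is the aggregated background $B^{*}$. The invariant to maintain, for the current accepted state $(S,B_S,E_S^{+},E_S^{-},H_S)$, is that either $S=\emptyset$ with everything empty, or $H_S$ is complete and consistent with respect to $(B_S,E_S^{+},E_S^{-})$.

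\textbf{Initialization and inductive step.}
\begin{enumerate}
\item At initialization $E_S^{+}=E_S^{-}=\emptyset$, so both conditions hold vacuously.
\item In the inductive step, consider a candidate subset $j$. If $j$ is rejected, the state is unchanged and the invariant carries over.
\item If $j$ is accepted, the acceptance test explicitly checks $B'\cup H'\models E^{+}{}'$ and $B'\cup H'\not\models E^{-}{}'$ for the hypothesis $H'$ that Popper recomputes from scratch. The update sets $(B_S,E_S^{+},E_S^{-},H_S)\gets(B',E^{+}{}',E^{-}{}',H')$, so the invariant holds with the new components by construction.
\item Because $H'$ is recomputed from scratch, there is no need to argue that earlier rules survive the addition of new background facts. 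This matters, since adding facts to $B'$ could in principle make an old rule fire on an old negative.
\item The partial-subset case, where examples are removed from $j$ until consistency is restored, is handled the same way. The retained union is exactly the instance on which the acceptance check was passed.
\end{enumerate}

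\textbf{Selection across retries.} Algorithm~\ref{alg:global-agg} keeps $H^{*}$ together with its own $(B^{*},E^{+*},E^{-*})$, copied from the same trial's final state. So $H_{\mathrm{agg}}=H^{*}$ inherits the invariant for exactly the pair $(E^{+},E^{-})=(E^{+*},E^{-*})$ referenced in the theorem.

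\textbf{Role of the negative-safety hypothesis.} Negative-safety is used to obtain consistency in a rule-wise form. Since $H_{\mathrm{agg}}$ is a definite program, entailment is monotone, and the least model of $B\cup H_{\mathrm{agg}}$ is built from immediate-consequence steps of individual rules. If every rule in $H_{\mathrm{agg}}$ is negative-safe, and the target predicate does not occur in rule bodies, then no negative example is derived. This gives consistency even without relying on Popper's check, and it is the form reused after pruning.

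\textbf{Main obstacle.} The hardest point is pinning down the semantics of "$\models$", which the paper leaves informal. First, rules may be recursive, so that one rule's output feeds another's body; then per-rule negative-safety does not by itself imply that the whole hypothesis is negative-safe. The fallback is to rely on the explicit global acceptance check, which verifies the hypothesis as a whole. Second, Popper's reported "100\% accuracy" must coincide with logical entailment under Prolog's semantics. I would handle this by assuming Popper's tester is sound, i.e., it reports coverage exactly when entailment holds under the bounded search. With that assumption stated, the remaining steps are routine.
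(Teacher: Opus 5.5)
Your proof is correct and follows the same route as the paper: both completeness and consistency come directly from the Level~3 acceptance test, which checks $B'\cup H'\models E^{+}{}'$ and $B'\cup H'\not\models E^{-}{}'$ on the whole aggregated instance, and your loop invariant just makes explicit the paper's claim that these properties are ``preserved throughout execution.'' Your extra bookkeeping tightens points the paper's proof passes over without changing the argument: you note that $H^{*}$ is stored together with its own $B^{*},E^{+*},E^{-*}$ across retries, and that recomputing $H'$ from scratch removes any need for old rules to stay safe once new background facts are added.
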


\begin{proof}
We argue correctness by establishing soundness with respect to negative examples and completeness with respect to positive coverage at the end of Level~3.

\emph{Soundness.}
At Levels~1 and~2, any subset whose background, bias, examples, or induced hypothesis violates consistency or is syntactically invalid is discarded. In particular, Popper is only allowed to return hypotheses $H_i$ satisfying
\[B_i \cup H_i \models E_i^{+}\quad\text{and}\quad
B_i \cup H_i \not\models E_i^{-}.\]
Thus, every retained subset is individually negative-safe. At Level~3, a candidate subset is incorporated into the aggregated ILP instance only if Popper finds a hypothesis that remains negative-safe on the union of all accepted negative examples. Therefore, no aggregation step can introduce a rule that covers a negative example, and soundness with respect to $E^{-}$ is preserved throughout execution.

\emph{Completeness w.r.t. aggregated positives.}
During aggregation, a subset is accepted only when the aggregated hypothesis achieves $100\%$ accuracy on the union of all accepted positive examples. Consequently, the final aggregated hypothesis $H_{\mathrm{agg}}$ satisfies
\[B \cup H_{\mathrm{agg}} \models E^{+}.\]

Combining soundness and completeness, we conclude that $H_{\mathrm{agg}}$ is training-correct on $(E^{+}, E^{-})$.

\end{proof}

We next show that the subsequent pruning step preserves safety, even though it may reduce completeness.

\begin{proposition}[Safety Preservation under Support Pruning]

Let $H_{\mathrm{final}}$ be the hypothesis obtained by applying support-based pruning to $H_{\mathrm{agg}}$. Then $H_{\mathrm{final}}$ remains sound with respect to negative examples, i.e.,
\[
B \cup H_{\mathrm{final}} \not\models E^{-}.
\]
\end{proposition}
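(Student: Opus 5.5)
The argument rests on monotonicity of entailment for definite logic programs. Support pruning only deletes clauses, so $H_{\mathrm{final}} \subseteq H_{\mathrm{agg}}$. Every clause of $H_{\mathrm{final}}$ is therefore a clause of $H_{\mathrm{agg}}$, and no new rule is introduced by Level~4. We then combine this inclusion with the soundness half of the Pre-Pruning Training Correctness theorem, namely $B \cup H_{\mathrm{agg}} \not\models e$ for every $e \in E^{-}$.

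The key steps, in order, are as follows.

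First, we record that \textsc{PruneBySupport} returns $H_{\mathrm{final}} = \{ r \in H_{\mathrm{agg}} \mid \operatorname{supp}(r) \ge \tau \cdot \max_{r'} \operatorname{supp}(r') \}$. This is a subset of $H_{\mathrm{agg}}$. The background knowledge $B$ and the example sets are left unchanged.

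Second, we invoke monotonicity. Popper learns definite (Horn) programs without negation as failure in our configuration, so the least Herbrand model of $B \cup H_{\mathrm{final}}$ is contained in that of $B \cup H_{\mathrm{agg}}$. Hence any ground atom entailed by $B \cup H_{\mathrm{final}}$ is also entailed by $B \cup H_{\mathrm{agg}}$.

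Third, we argue by contradiction. Suppose some $e \in E^{-}$ satisfied $B \cup H_{\mathrm{final}} \models e$. Monotonicity would then give $B \cup H_{\mathrm{agg}} \models e$, contradicting the soundness established in the previous theorem. It follows that $B \cup H_{\mathrm{final}} \not\models e$ for all $e \in E^{-}$.

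As a remark, the same reasoning explains why completeness is lost. Removing a rule may remove the only derivation of some $e \in E^{+}$, and monotonicity only transfers non-entailment downward, not entailment.

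The main obstacle is justifying monotonicity in the second step. If $B$ or the learned clauses used negation as failure, deleting a rule could make a negated body literal true and create a new derivation of a negative example. I would handle this by stating explicitly the standing assumption that $B$ and $H_{\mathrm{agg}}$ form a definite program, which is Popper's default setting with no negation in the bias file. Alternatively, if stratified negation is permitted, I would require that the target predicate $\texttt{collision}$ does not occur in any rule body. Under that requirement, pruning clauses with head $\texttt{collision}$ cannot affect any negated literal, and the model of the remaining predicates is unchanged.

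A secondary point is the convention for $\not\models E^{-}$. I read it elementwise, as "no negative example is entailed", which is how soundness is used in the preceding proof.
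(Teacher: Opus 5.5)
Your proof is correct and takes the same route as the paper: pruning only deletes clauses, so $H_{\mathrm{final}} \subseteq H_{\mathrm{agg}}$, and the soundness of $H_{\mathrm{agg}}$ with respect to $E^{-}$ carries over to $H_{\mathrm{final}}$. The paper relies on monotonicity of entailment without stating it (``removing rules cannot cause any negative example to become provable''), so your explicit assumption of definite programs without negation as failure makes its hidden premise precise rather than changing the argument.
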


\begin{proof}
Support-based pruning removes rules from $H_{\mathrm{agg}}$ but does not introduce new rules or relax negative-safety constraints. Since soundness holds for $H_{\mathrm{agg}}$, removing rules cannot cause any negative example to become provable. Therefore, $H_{\mathrm{final}}$ remains negative-safe.
\end{proof}

\section{Experimental Results}
\label{sec:result}

\subsection{Evaluation Dataset and Protocol}
\label{subsec:eval_protocol}

We curated a held-out test set consisting of 38 independent runway-incursion scenarios. The selected scenarios include a diverse set of real world runway incursion events drawn from operational reports, including incidents such as the 2023 runway incursion at JFK.

\paragraph{Test set construction.}
The evaluation set combines two sources.
First, we draw real-world incidents from publicly documented Category~A and~B runway incursions~\cite{faaRunwayIncursions}.
The availability of such cases with sufficient detail for symbolic annotation is limited, so we supplement them with manually constructed canonical scenarios designed to capture common and safety-critical incursion patterns. Of the 38 scenarios, 28 are real-world cases drawn from a geographically diverse set of airports with both parallel and intersecting runway configurations. The remaining 10 are canonical scenarios covering patterns such as failure to vacate an active runway prior to the arrival of succeeding traffic and unauthorized crossing onto an active runway during an active takeoff or landing roll.

\paragraph{Annotation and verification.}
For each scenario, we manually annotated background knowledge, bias, and positive and negative examples based on real operational contexts, without relying on LLM or VLM extraction.
The resulting test set contains 44 positive incursions and 44 negative examples.
Although the real-world data distribution is inherently imbalanced, this balanced evaluation design prevents metric inflation due to majority-class bias and ensures that false positives and false negatives are penalized equally.

To verify the logical realizability of these annotations, we constructed a hand-engineered rule set using the fixed predicate vocabulary.
This rule set achieves perfect accuracy on the held-out test set, confirming that all scenarios are representable under the assumed symbolic representation.
This verification validates the annotation framework rather than serving as a comparative baseline.

\paragraph{Metrics.}
Each learned hypothesis is evaluated by logical entailment: a scenario is correct if the hypothesis entails all positive examples and rejects all negative examples.
We report accuracy, precision, recall, and F1 score over the full test set.

\subsection{System Variants}
\label{subsec:variants}

We compare three system variants reflecting increasing levels of symbolic reasoning and consistency enforcement:

\begin{enumerate}
    \item \textbf{LLM Only (GPT-4o~\cite{openai2024gpt4o})}:
    The LLM directly generates logical rules from violation report text, without symbolic verification or nominal data.

    \item \textbf{Na\"ive ILP (Pruning Only)}:
    Positive and negative examples are extracted as in Sections~\ref{sec:PosData} and~\ref{sec:NegData}. ILP is run once on the full extracted components with pruning, but without subset-level filtering or iterative consistency enforcement.

    \item \textbf{World2Rules (Ours)}:
    The full pipeline described in Section~\ref{sec:ourApproach}, including extraction validation, subset-level validation, iterative aggregation with global consistency enforcement, and rule-support pruning.
\end{enumerate}

All ILP variants use identical background predicates, mode declarations, and hypothesis constraints.
Performance differences therefore reflect the impact of solver-enforced feedback rather than changes in expressivity.

\subsection{Results}
\label{subsec:results}

\paragraph{Method comparison.}
We evaluate all three system variants trained on nominal data and 300 violation reports (Fig.~\ref{fig:exp1_comparison}).
World2Rules achieves an F1 score of 94.0\%, outperforming the LLM-only baseline (70.4\%) by 23.6 percentage points and the na\"ive ILP baseline (50.8\%) by 43.2 percentage points.
Both ILP variants achieve perfect or near-perfect precision, indicating that solver-backed induction avoids false positives; the key differentiator is recall, where iterative consistency enforcement enables World2Rules to capture a substantially broader range of incursion patterns.

\begin{figure}[t]
    \centering
    \includegraphics[width=\textwidth]{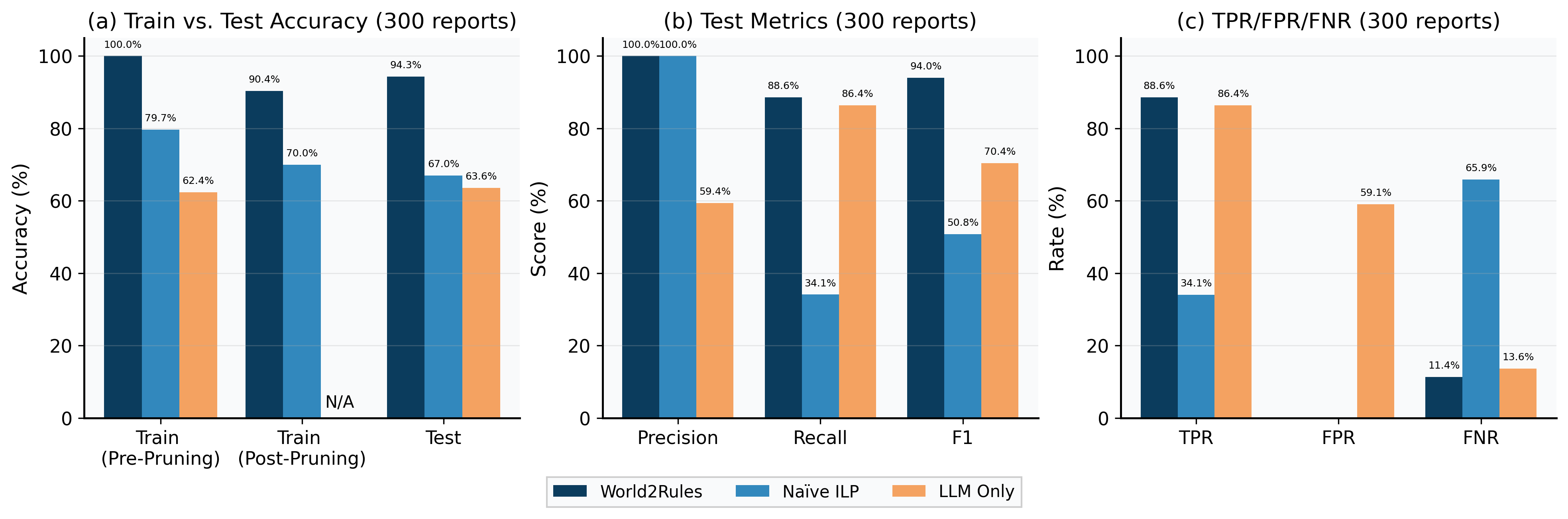}
    \caption{
    Performance comparison across system variants using 300 violation reports. World2Rules achieves 94.0\% F1, outperforming the LLM-only baseline (70.4\%) and na\"ive ILP (50.8\%) while maintaining perfect precision.
    }
    \label{fig:exp1_comparison}
\end{figure}

\paragraph{Data scaling.}
To assess how World2Rules scales with training data, we subsample the violation report corpus at four sizes: 10, 100, 200, and 300 reports (Fig.~\ref{fig:exp2_ablation}). Nominal observations are subsampled and paired with each report to form subsets, as described in Sec.~\ref{sec:ILPLearning}. F1 score improves monotonically from 52.5\% at 10 reports to 94.0\% at 300, driven primarily by gains in recall (36.4\% to 88.6\%) while precision remains at or near 100\%.

\begin{figure}[t]
    \centering
    \includegraphics[width=\textwidth]{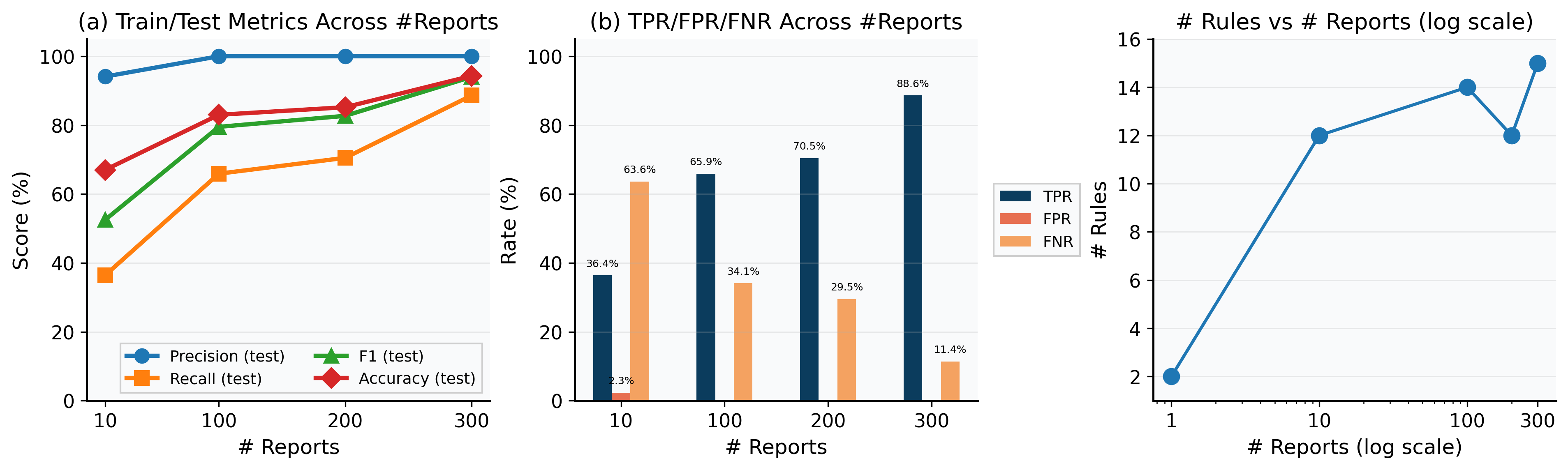}
    \caption{
    Data scaling for World2Rules. F1 score improves from 52.5\% (10 reports) to 94.0\% (300 reports), driven by recall gains while precision remains near 100\%.
    }
    \label{fig:exp2_ablation}
\end{figure}

\subsection{Qualitative Analysis}
\label{subsec:qualitative}

Figure~\ref{fig:exp3_rules} shows representative collision rules learned by World2Rules from 300 violation reports.
Each rule captures a distinct incursion pattern:
Rule~1 flags a collision when one aircraft is landing while another holds on the same runway.
Rule~2 captures an aircraft crossing a runway on which another is landing.
Rule~3 identifies two aircraft on extended areas of the same runway, one landing and one already present. 
The final hypothesis consists of 15 rules.
These rules are compact, interpretable, and correspond to recognized runway incursion categories.

\lstdefinestyle{rules}{ basicstyle=\ttfamily\scriptsize, columns=fullflexible, breaklines=true, breakatwhitespace=false, showstringspaces=false, frame=single, xleftmargin=0.4em, xrightmargin=0.4em, aboveskip=0.4em, belowskip=0.2em, }

\begin{figure}[htbp]
\centering
\captionsetup{font=small}
\begin{lstlisting}[style=rules]
collision(V0,V1):- landing_runway(V1,V2),same_runway(V3,V2),holding_on_runway(V0,V3).
collision(V0,V1):- landing_runway(V1,V2),cross_runway(V0,V2).
collision(V0,V1):- on_extended_area_runway(V1,V2),landing_runway(V0,V3),same_runway(V2,V3).
\end{lstlisting}
\caption{Sample collision rules learned by World2Rules. Each rule captures a distinct runway incursion pattern: holding on an active runway (Rule~1), crossing during landing (Rule~2), and co-occupation of a runway's extended area (Rule~3).}
\label{fig:exp3_rules}
\end{figure}

\subsection{Discussion}
\label{subsec:discussion}

Solver-backed verification plays a critical role in inducing reliable symbolic rules for runway incursion detection.
Across all experiments, World2Rules achieves high precision while substantially improving F1 relative to both purely neural and single-pass ILP baselines.
This balance is particularly important in safety-critical contexts, where systems must avoid spurious hazard detections while reliably identifying genuinely unsafe configurations.

The comparison between na\"ive ILP and World2Rules highlights a key limitation of single-pass neuro-symbolic integration.
Although na\"ive ILP achieves zero false positives, its low recall indicates that conservative hypotheses induced from aggregated, unverified extracted data fail to generalize to diverse operational scenarios.
Iterative consistency enforcement enables the solver to reject unsupported symbolic evidence early and accumulate only logically coherent training subsets.

The data scaling experiment shows that increased training diversity directly benefits rule induction under solver-backed verification.
The proposed aggregation strategy effectively leverages larger corpora without introducing spurious rules.
The improvement in recall with increasing data suggests that the induced hypotheses capture a broader set of incursion patterns rather than overfitting to a narrow subset of scenarios.
The size of the induced rule set stabilizes as training diversity increases, consistent with aggregation refining existing constraints rather than introducing redundant hypotheses.

The observed non-monotonicity in rule count arises solely during the final pruning stage.
Prior to pruning, the raw hypotheses grow monotonically with additional violation reports.
Pruning removes redundant or weakly supported constraints as additional evidence refines the learned rule set.

\subsection{Limitations and Future Work}

\begin{enumerate}
    \item \textbf{Evaluation scale.} The test set is constrained by the limited availability of publicly documented major runway incursions with sufficient detail for symbolic annotation. Future work should expand evaluation as additional incident reports become available.
    \item \textbf{Static relational model.} The current framework models static relational configurations and does not explicitly capture temporal evolution. Incorporating temporal reasoning and uncertainty-aware predicates is an important direction.
    \item \textbf{Fixed predicate vocabulary.} World2Rules relies on expert-defined predicates, which may limit expressiveness. The current evaluation uses 13 predicates; scaling to significantly larger vocabularies or more complex relational structures would increase the ILP search space and may require additional search heuristics or vocabulary decomposition strategies. Extending the framework to support learned or dynamically expanded predicates would improve adaptability.
\end{enumerate}

Beyond aviation, World2Rules can be applied to other domains governed by structured safety rules.
The framework relies on a predicate vocabulary and constraint schema that define the hypothesis space; in new domains, this vocabulary can be redefined to capture relevant entities, relations, and safety conditions, and the same solver-backed learning procedure applies without modification.


\section{Conclusion}
\label{sec:conclusion}
World2Rules learns interpretable first-order logic safety rules from multimodal data by combining pretrained extractors with solver-backed inductive logic programming.
By treating pretrained models as proposal mechanisms and ILP as a verification layer, the framework enforces consistency at multiple levels of granularity, from individual extractions through global rule aggregation.
On real-world aviation safety data, World2Rules achieves 94.0\% F1, outperforming a purely neural baseline by 23.6 and a single-pass neuro-symbolic baseline by 43.2 percentage points, while producing compact, human-readable rules that correspond to recognized runway incursion patterns.
Although we instantiate the framework for aviation surface operations, the approach is domain-agnostic: given a predicate vocabulary and multimodal data sources, the same solver-backed learning procedure can synthesize verifiable safety rules in any domain governed by structured constraints.


\begin{credits}
\subsubsection{\ackname} This work was supported by Boeing (award \#2022UIPA422). This work used Bridges-2 at Pittsburgh Supercomputing Center through allocation \verb|cis220039p| from the Advanced Cyberinfrastructure Coordination Ecosystem: Services \& Support (ACCESS) program, which is supported by National Science Foundation grants \#2138259, \#2138286, \#2138307, \#2137603, and \#2138296. The authors thank Bowen Li for helpful discussions and feedback that contributed to this work. The authors also acknowledge the use of generative AI tools to assist with editing portions of the manuscript and with software development. All generated content was reviewed, verified, and edited by the authors.

\end{credits}


\bibliographystyle{splncs04}
\bibliography{ref}  

\section{Appendix}
\appendix

\subsection{A. Predicate Vocabulary}

\begin{center}
\begin{Verbatim}[frame=single,fontsize=\small]
collision(Agent1, Agent2).
pos(collision(Agent1, Agent2)).
neg(collision(Agent1, Agent2)).
landing_runway(Agent1, Runway1).
takeoff_runway(Agent1, Runway1).
cross_runway(Agent1, Runway1).
on_taxiway(Agent1).
holding_short_runway(Agent1, Runway1).
on_extended_area_runway(Agent1, Runway1).
holding_on_runway(Agent1, Runway1).
parallel_runways(Runway1, Runway2).
intersecting_runways(Runway1, Runway2).
same_runway(Runway1, Runway2).
\end{Verbatim}
\captionof{figure}{Predicate vocabulary used for all Prolog extraction tasks.}
\end{center}

\end{document}